\documentclass[authoryear,10pt,letterpaper]{elsarticle}
\usepackage[top=0.75in, bottom=0.75in, left=0.75in, right=0.75in]{geometry}

\usepackage{palatino}
\usepackage{amsmath,amsthm}
\usepackage{amssymb}
\usepackage{bbm}
\usepackage{amsfonts}
\usepackage{graphicx}
\usepackage{subfigure}
\usepackage{algorithm}
\usepackage{algorithmic}
\usepackage{tabularx,booktabs}

\usepackage{tablefootnote}

\usepackage{xcolor}
\definecolor{darkblue}{rgb}{0.0,0.5,0.5}
\definecolor{blue}{rgb}{0.0,0.5,0.68}
\usepackage[colorlinks]{hyperref}
\hypersetup{colorlinks,breaklinks,linkcolor=darkblue,urlcolor=darkblue,anchorcolor=darkblue,citecolor=darkblue}
\usepackage{booktabs,caption}
\usepackage{multirow}
\usepackage{lscape}
\usepackage{epstopdf}
\usepackage{lineno}
\usepackage{subfigure}

\hyphenpenalty=2000
\tolerance=500
\usepackage{microtype}
\usepackage{lineno}
\urlstyle{same}
\newdefinition{definition}{Definition}
\newtheorem{theorem}{Theorem}

\journal{ }
\makeatletter
\def\ps@pprintTitle{%
   \let\@oddhead\@empty
   \let\@evenhead\@empty
   \let\@oddfoot\@empty
   \let\@evenfoot\@oddfoot
}
\makeatother

\begin{document}

\begin{frontmatter}



\title{{\fontfamily{lmss}\selectfont A Nonconvex Low-Rank Tensor Completion Model for Spatiotemporal Traffic Data Imputation}}

\author[label1]{Xinyu Chen}
\ead{chenxy346@gmail.com}

\author[label2]{Jinming Yang}
\ead{yangjm67@gmail.com}

\author[label1]{Lijun Sun\corref{cor1}}
\ead{lijun.sun@mcgill.ca}

\address[label1]{Department of Civil Engineering, McGill University, Montreal, QC H3A 0C3, Canada}
\address[label2]{School of Naval Architecture, Ocean and Civil Engineering, Shanghai Jiao Tong University, Shanghai 200240, China}

\cortext[cor1]{Corresponding author. Address: 492-817 Sherbrooke Street West, Macdonald Engineering Building, Montreal, Quebec H3A 0C3, Canada}

\begin{abstract}
Sparsity and missing data problems are very common in spatiotemporal traffic data collected from various sensing systems. Making accurate imputation is critical to many applications in intelligent transportation systems. In this paper, we formulate the missing data imputation problem in spatiotemporal traffic data in a low-rank tensor completion (LRTC) framework and define a novel truncated nuclear norm (TNN) on traffic tensors of location$\times$day$\times$time of day. In particular, we introduce an universal rate parameter to control the degree of truncation on all tensor modes in the proposed LRTC-TNN model, and this allows us to better characterize the hidden patterns in spatiotemporal traffic data. Based on the framework of the Alternating Direction Method of Multipliers (ADMM), we present an efficient algorithm to obtain the optimal solution for each variable. We conduct numerical experiments on four spatiotemporal traffic data sets, and our results show that the proposed LRTC-TNN model outperforms many state-of-the-art imputation models with missing rates/patterns. Moreover, the proposed model also outperforms other baseline models in extreme missing scenarios.




\end{abstract}

\begin{keyword}
Spatiotemporal traffic data, Missing data imputation, Low-rank tensor completion, Truncated nuclear norm (TNN) minimization, Nonconvex optimization
\end{keyword}

\end{frontmatter}

\section{Introduction}


Spatiotemporal traffic data, which registers time-stamped traffic state (e.g., flow and speed) observations from different locations (e.g., a network of sensors), serves as a critical input to a wide range of applications in intelligent transportation systems (ITS), such as travel time estimation, trip planning, traffic forecasting, to name but a few. With the recent development in information and communication technology (ICT), the scale and dimension of spatiotemporal traffic data also become larger. In the meanwhile, however, these data sets also suffer from the missing data problem, which undermines their utility and effectiveness in real-world applications. In practice, the missing data problem may arise from sensor malfunctioning, communication failure, and maintenance. Another critical reason that causes the missing data problem is due to the insufficient sensor coverage in both the spatial and the temporal dimensions. For example, floating cars in an urban transportation network can provide a good sample of traffic information in real-time. However, the data itself is in nature sparse and far from enough to support city-wide and fine-resolution traffic speed and travel time monitoring. Therefore, performing imputation on spatiotemporal traffic data has become a critical step before further applications. Recent research has shown an increasing interest in developing efficient and effective missing data imputation methods for large-scale and high-dimensional traffic data \citep[see e.g., ][]{zhu2012compressive,li2013efficient,asif2016matrix,tan2016short,chen2019missing,chen2019abayesian,yu2020urban}.


\begin{figure}[!ht]
\centering
  \includegraphics[width=\textwidth]{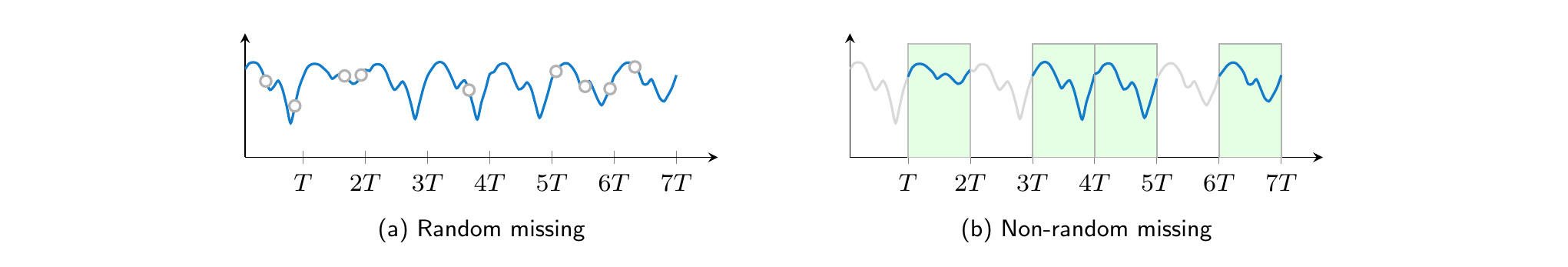}
\caption{Missing pattern examples of spatiotemporal traffic data (e.g., traffic speed). For a given spatial location, collected traffic data are indeed time series. In both two examples, $T$ is the total number of time intervals per day. (a) Data are missing at random. Small circles represent the missing values. (b) Data are missing not at random. This is a more common and realistic missing pattern for spatiotemporal traffic data in which data are missing continuously during few time periods. Blue curves within green plates represent the observed values, while gray curves represent the missing values.}
\label{missing}
\end{figure}

Incomplete spatiotemporal traffic data are multivariate/multidimensional time series with various missing patterns and missing ratios. Essentially, the missing patterns can be summarized into two types: random missing and non-random missing. Figure~\ref{missing} presents two intuitive examples of missing patterns (i.e., random missing and non-random missing) in a spatiotemporal setting. The fluctuation of the power grid and the loss of the transmission packets can lead to data random missing scenarios in which individual data points are randomly lost. While the sensor malfunctioning and regular maintenance may cause non-random data missing scenarios in which we have correlation corruption for a certain period of time. It has been demonstrated that non-random missing patterns are more difficult to deal with than random missing due to the correlated corruption. It is very challenging to impute the incomplete traffic data under non-random or mixture missing patterns that data may be lost for several hours or even several days.

The fundamental challenge of missing data imputation is to effectively borrow information from those ``observed'' entries by characterizing the higher-order correlations and dependencies in the data. To achieve this, recent research has shown an increasing interest in taking advantage of the low-rank property of spatiotemporal data, which allows us to apply techniques such as compressive sensing, principal component analysis (PCA), and matrix/tensor factorization. For example, \citet{zhu2012compressive} imposed the low-rank assumption on a traffic state matrix from a list of road segments over certain time and transforms missing data imputation to a matrix completion problem; based on probabilistic PCA, \citet{qu2009ppca} and \citet{li2013efficient} developed an improved model to better characterize the nonlinear spatiotemporal dependencies; based on probe vehicle data, \citet{yu2020urban} developed a city-wide traffic data estimation framework by performing matrix completion using Schatten $p$-norm.

Traffic time series data is unique in the sense that there exists strong daily patterns and similarity \citep{li2015trend}. To better capture the correlations and utilize the global structure across different days, \citet{asif2016matrix}, \citet{tan2016short} and \citet{chen2018spatial} model the temporal dimension as the combination of day and time of day, and then apply tensor factorization techniques to impute missing values. In general, these models fold a long time series vector from a sensor to a matrix by stacking the vectors from different days; thus, these works transform the original time series matrix to a third-order tensor by introducing ``day'' as an additional dimension. Given that most traffic data sets are inherently low-rank, these low rank-based models have demonstrated superior performance over other methods.

In the meanwhile, substantial research progress has been made in developing new learning methods for low-rank tensor completion (LRTC). Given that the rank operator is neither convex nor continuous, the nuclear norm (NN) is introduced as a convex surrogate for the rank function to be minimized. LRTC with a NN function has attracted considerable attention recently, and many variants have been developed in the past few years \citep[see e.g.,][]{liu2013tensor}. However, despite the remarkable performance of NN in matrix/tensor completion, it is also suggested that the convex NN relaxation may not be sufficient to approximate the rank, as all the singular values are simultaneously minimized \citep{hu2013fast}. It has been proved that solving the convex NN regularized problem leads to a near-optimal low-rank solution if we impose certain incoherence assumptions on the singular values of the matrix \citep{candes2010thepower}; however, such conditions are rarely met in practice. As a result, recent research has tried to identify alternative approximation to the rank function, and some studies have shown the advantage of using nonconvex surrogate functions to approximate the rank for matrices
\citep[see e.g.,][]{hu2013fast,lu2014generalized,lu2015generalized,yao2019largescale,fan2019factor}. For example, \citet{lu2014generalized} and \citet{yao2019largescale} evaluated several nonconvex singular value functions for approximating the rank function on low-rank matrix learning, and it is shown that nonconvex surrogate regularization can work better than NN. Recent studies have also extended matrix TNN to a tensor setting \citep[e.g.,][]{huang2014truncated,han2017sparse,xue2018low}, showing that nonconvex TNN minimization is more capable than convex NN minimization on tensor completion tasks. However, these studies mainly focus on an image setting (e.g., a tensor of 256$\times$256$\times$3 ($n_1\times n_2 \times n_3$)) in which $n_3\ll n_1, n_2$, and it remains a critical challenge to properly define the degree of truncation on each tensor mode in the case of spatiotemporal traffic data without prior knowledge. To solve this problem, we propose a new TNN model in which we automatically determine the nuclear norm truncation along each mode of a given data tensor by introducing a global rate parameter. Our contribution is twofold:
\begin{itemize}
    \item We develop an imputation model that benefits from our domain knowledge. Following the nonconvex TNN minimization on matrix learning problems, we propose a TNN minimization based LRTC model with elegantly defined TNN on tensors and apply an efficient optimization algorithm---Alternating Direction Method of Multipliers (ADMM)---to solve this model. The truncation along all tensor modes can be automatically determined by a rate parameter in this model, and this technical operation is more flexible than what given in previous LRTC-TNN models.

    \item We show the capability and advantage of the proposed model with extensive data sets and imputation experiments. Relying on the four selected spatiotemporal traffic data sets, we demonstrate the superiority (e.g., accuracy, efficiency) of the proposed model with various missing rates/patterns over the state-of-the-art imputation models. In a very unusual case that data have a large number of missing values (e.g., missing rate is 70\%), we also show that the proposed model provides accurate imputation results and outperforms the competing models.


\end{itemize}

The remainder of this paper is organized as follows. In Section~\ref{sec:method}, we first present the basic formulation of low-rank tensor completion (LRTC), and then we propose a new formulation based on truncated nuclear norm (TNN) minimization by introducing a universal rate parameter to control the degree of truncation. Section~\ref{sec:implementation} depicts the main pipeline for model implementation and application. In Section~\ref{sec:experiments}, we conduct numerical experiments on several real-world data sets under two aforementioned missing scenarios, and we evaluate the proposed LRTC-TNN model against several state-of-the-art baseline models. Section~\ref{sec:conclusion} summarizes this study and we discuss several directions for future research.









\section{Methodology} \label{sec:method}

In this section, we first introduce the basic formulation of the LRTC model. Then, we introduce the LRTC with truncated nuclear norm (TNN) minimization in detail. In particular, we define the formulation of TNN on tensors and integrate it into LRTC.



\subsection{Notations}

Throughout this work, we follow the same notations as in \cite{kolda2009tensor}. We use boldface uppercase letters to denote matrices, e.g., $\boldsymbol{X}\in\mathbb{R}^{m\times n}$, boldface lowercase letters to denote vectors, e.g., $\boldsymbol{x}_{i}\in\mathbb{R}^{n}$, and lowercase letters to denote scalars, e.g., $x_{ij}$. Given a matrix $\boldsymbol{X}$, the Frobenius norm is defined as $\|\boldsymbol{X}\|_{F}=\sqrt{\sum_{i,j}x_{ij}^{2}}$. We denote a $d$th-order tensor and its entries by $\boldsymbol{\mathcal{X}}\in\mathbb{R}^{n_1\times \cdots \times n_d}$ and $x_{i_1...i_d}$, respectively. We also define the Frobenius norm on tensor as $\|\boldsymbol{\mathcal{X}}\|_F=\sqrt{\sum_{i_1,i_2,...,i_d}x_{i_1i_2\cdots i_d}}$. Let $\boldsymbol{\mathcal{X}}_{(k)} \in \mathbb{R}^{n_{k} \times\left(\prod_{l \neq k} n_{l}\right)}$ denote the $k$th-mode unfolding of tensor $\boldsymbol{\mathcal{X}}$ for $k=1,...,d$. Correspondingly, we define a folding operator that converts a matrix to a higher-order tensor in the $k$th-mode as $\text{fold}_{k}(\cdot)$. Thus, we have $\text{fold}_{k}(\boldsymbol{\mathcal{X}}_{(k)})=\boldsymbol{\mathcal{X}}$.

\subsection{Low-Rank Tensor Completion (LRTC)}

LRTC is a family of tensor completion techniques. This type of machine learning model is essentially built on the low-rank assumption on the partially observed input tensor, which is same to the low-rank matrix completion. In this work, we mainly focus on modeling a third-order tensor in the context of spatiotemporal traffic data. For a partially observed third-order tensor $\mathcal{Y}\in\mathbb{R}^{M\times N\times T}$, the LRTC model can be formulated as
\begin{equation}
    \begin{aligned}
    &\min_{\boldsymbol{\mathcal{X}}}~\operatorname{rank}(\boldsymbol{\mathcal{X}}) \\
    &\text{s.t.}~\mathcal{P}_{\Omega}(\boldsymbol{\mathcal{X}})=\mathcal{P}_{\Omega}(\boldsymbol{\mathcal{Y}}),
    \end{aligned}
    \label{lrtc}
\end{equation}
where $\boldsymbol{\mathcal{X}}\in\mathbb{R}^{M\times N\times T}$ is the recovered tensor that we hope to find, $\Omega$ is the index set of the observed entries \citep{liu2013tensor}. The notation $\text{rank}(\cdot)$ refers to the algebraic rank, and extension to higher-order tensors is straightforward. The operator $\mathcal{P}_{\Omega}:\mathbb{R}^{M \times N\times T} \mapsto \mathbb{R}^{M \times N\times T}$ is an orthogonal projection supported on $\Omega$:
\begin{equation*}
    \left[\mathcal{P}_{\Omega}\left(\boldsymbol{\mathcal{X}}\right)\right]_{mnt}=\left\{
    \begin{array}{ll}
    x_{mnt}, &\text{if} \left(m,n,t\right)\in\Omega, \\
    0, & \text{otherwise},
    \end{array}\right.
\end{equation*}
for any tensor $\boldsymbol{\mathcal{X}}$. The operator $\mathcal{P}_{\Omega}^{\perp}: \mathbb{R}^{M\times N\times T}\mapsto\mathbb{R}^{M\times N\times T}$ denotes the projection onto the complementary set of $\Omega$. The relationship between these two operators is $\mathcal{P}_{\Omega}(\boldsymbol{\mathcal{X}})+\mathcal{P}_{\Omega}^{\perp}(\boldsymbol{\mathcal{X}})=\boldsymbol{\mathcal{X}}$.

The rank minimization problem in Eq.~\eqref{lrtc} for tensor completion is NP-hard and computationally intractable \citep{liu2013tensor}. To deal with this issue, recent research focuses on identifying a possible convex relaxation as an alternative to the rank minimization problem. For example, \cite{liu2013tensor} proposed to use the multiple NN to replace the rank function in the objective function:
\begin{equation}
    \begin{aligned}
    &\min_{\boldsymbol{\mathcal{X}}}~\sum_{k=1}^{3}\alpha_k\|\boldsymbol{\mathcal{X}}_{(k)}\|_{*} \\
    &\text { s.t.}~\mathcal{P}_{\Omega}\left(\boldsymbol{\mathcal{X}}\right)=\mathcal{P}_{\Omega}\left(\boldsymbol{\mathcal{Y}}\right),
    \end{aligned}
    \label{LRTC_NN}
\end{equation}
where $\alpha_k\geq 0$ for $k=1,2,3$ are weight parameters. In this objective function, for any matrix $\boldsymbol{X}$, the NN is defined as $\|\boldsymbol{{X}}\|_{*}=\sum_{i}\sigma_{i}(\boldsymbol{{X}})$ and $\sigma_{i}(\boldsymbol{{X}})$ is the $i$th biggest singular value of $\boldsymbol{{X}}$. In an early study, \cite{fazel2002matrix} proved that the standard NN $\|\boldsymbol{X}\|_{*}$ is the convex envelope of the nonconvex rank function, i.e., $\text{rank}(\boldsymbol{X})$, when the largest singular value is not larger than 1.

Despite that the NN minimization method has achieved tremendous progress and remarkable success in  missing matrix/tensor data imputation tasks in previous studies, recent studies suggest that the result can be significantly improved by using certain nonconvex functions constructed on singular values instead of the default NN \citep{hu2013fast,gu2014weighted,yao2019largescale}. For example, instead of minimizing all singular values of a matrix simultaneously, TNN minimization keeps large singular values unchanged to preserve major components and only consider those small singular values as variables. In this work, our preliminary goal is to define a type of TNN on tensors to replace the corresponding NN in Eq.~\eqref{LRTC_NN}.






\subsection{LRTC with TNN Minimization (LRTC-TNN)}



Before introducing the formulation of TNN on tensors, we first give a commonly used definition for the TNN on matrices.


\begin{definition}[Truncated Nuclear Norm on Matrices \citep{zhang2012matrix,hu2013fast}] Given a matrix $\boldsymbol{X}\in\mathbb{R}^{m\times n}$ and a positive integer $r<\min\{m,n\}$, TNN $\|\boldsymbol{X}\|_{r,*}$ is defined as the sum of $\min\{m,n\}-r$ minimum singular values, i.e.,
\begin{equation}
\|\boldsymbol{X}\|_{r,*}=\sum_{i=r+1}^{\min\{m,n\}}\sigma_{i}(\boldsymbol{X}),
\label{tnn}
\end{equation}
where $\sigma_{i}(\boldsymbol{X})$ is the $i$th singular value of $\boldsymbol{X}$. The singular values are sorted as $\sigma_{1} \geq \sigma_{2} \geq \cdots \geq \sigma_{\min \{m, n\}} \geq 0$.
\label{def1}
\end{definition}

In Definition~\ref{def1}, the largest $r$ singular values $\sigma_1(\boldsymbol{X}),...,\sigma_{r}(\boldsymbol{X})$ do not contribute to the TNN function. This truncation definition makes it possible to minimize the NN subtracted by the sum of the largest $r$ singular values rather than all singular values simultaneously. However, this definition on matrices cannot be directly used for multiway/multidimensional tensors. In accordance with TNN on matrices, we follow \citet{liu2013tensor} and propose to use tensor unfoldings (i.e., matrices) to define the TNN on tensors.

\begin{definition}[Truncated Nuclear Norm on Tensors]
For any $d$th-order tensor $\boldsymbol{\mathcal{X}}\in\mathbb{R}^{n_1\times n_2\times \cdots\times n_d}$, the multiple TNN is defined as
\begin{equation}
\|\boldsymbol{\mathcal{X}}\|_{\theta,*}=\sum_{k=1}^{d}\alpha_k\|\boldsymbol{\mathcal{X}}_{(k)}\|_{r_{k},*},
\label{tensor_tnn}
\end{equation}
with the truncation for each tensor mode being
\begin{equation}
r_{k}=\lceil \theta \cdot\min\{n_k,\prod_{h\neq k}n_{h}\}\rceil,\forall k\in\{1,2,...,d\},
\end{equation}
where $\left\lceil \cdot\right\rceil$ means the smallest integer that is not less than the given value, $\theta$ is an universal rate parameter which controls the whole truncation on $d$ modes of $\boldsymbol{\mathcal{X}}$ and it should satisfies $1\leq r_k<\min\{n_k,\prod_{h\neq k}n_{h}\}$ in this case, $\alpha_1,...,\alpha_d$ with $(\sum_{k}\alpha_k=1)$ are weight parameters imposed on all the TNN of unfolding matrices $\boldsymbol{\mathcal{X}}_{(1)},...,\boldsymbol{\mathcal{X}}_{(d)}$, respectively.
\label{def2}
\end{definition}

Provided Definition~\ref{def2} as the basis of LRTC with TNN minimization, we believe that if the rate parameter $\theta$ can be set appropriately, the truncation for each tensor mode would be assigned automatically. By replacing the NN in Eq.~\eqref{LRTC_NN} with the defined TNN in Eq.~\eqref{tensor_tnn}, the LRTC model is now given by
\begin{equation}
    \begin{aligned}
    &\min_{\boldsymbol{\mathcal{X}}}~\sum_{k=1}^{3}\alpha_k\|\boldsymbol{\mathcal{X}}_{(k)}\|_{r_k,*} \\
    &\text { s.t.}~\mathcal{P}_{\Omega}\left(\boldsymbol{\mathcal{X}}\right)=\mathcal{P}_{\Omega}\left(\boldsymbol{\mathcal{Y}}\right).
    \end{aligned}
    \label{LRTC_TNN}
\end{equation}

However, Eq.~\eqref{LRTC_TNN} is still not in its appropriate form. In fact, unfolding a tensor in different mode cannot guarantee the dependencies of variables (i.e., unfolding matrices of the tensor) in the objective function \citep{liu2013tensor}. To this end, we introduce an auxiliary tensor variable $\boldsymbol{\mathcal{M}}$ and an additional set of constraints $\boldsymbol{\mathcal{X}}_{k}=\boldsymbol{\mathcal{M}},k=1,2,3$ to convert the problem \eqref{LRTC_TNN} into a tractable problem as follows,
\begin{equation}
    \begin{aligned}
    &\min \limits_{\boldsymbol{\mathcal{M}},\boldsymbol{\mathcal{X}}_{1},\boldsymbol{\mathcal{X}}_{2},\boldsymbol{\mathcal{X}}_{3}}~\sum_{k=1}^{3}\alpha_k\|\boldsymbol{\mathcal{X}}_{k(k)}\|_{r_k,*} \\
    &\text { s.t.}~\left\{\begin{array}{l} \boldsymbol{\mathcal{X}}_{k}=\boldsymbol{\mathcal{M}}, k=1,2,3,\\
    \mathcal{P}_{\Omega}\left(\boldsymbol{\mathcal{M}}\right)=\mathcal{P}_{\Omega}\left(\boldsymbol{\mathcal{Y}}\right), \\ \end{array}\right.
    \end{aligned}
    \label{LRTC_TNN_new}
\end{equation}
where $\boldsymbol{\mathcal{M}}$ is introduced to keep observation information and then broadcast such information to the variables $\boldsymbol{\mathcal{X}}_{k},k=1,2,3$. In the current formulation, tensors $\boldsymbol{\mathcal{X}}_{k},k=1,2,3$ are involved with TNNs while another variable $\boldsymbol{\mathcal{M}}$ establishes the relationship with the partially observed tensor $\boldsymbol{\mathcal{Y}}$.

\subsection{Solution Algorithms}

To solve this optimization problem, a straightforward and widely used approach is the Alternating Direction Method of Multipliers (ADMM) framework. This framework has been experimentally proved to be efficient for low-rank matrix/tensor completion models \citep{hu2013fast,liu2013tensor}. Before setting up an ADMM model, we need to write the augmented Lagrangian function of Eq.~\eqref{LRTC_TNN_new} in advance:
\begin{equation}
    \begin{aligned}
    \mathcal{L}(\boldsymbol{\mathcal{M}},\left\{\boldsymbol{\mathcal{X}}_{k},\boldsymbol{\mathcal{T}}_{k}\right\}_{k=1}^{3})
    =\sum_{k=1}^{3}\left(\alpha_k\|\boldsymbol{\mathcal{X}}_{k(k)}\|_{r_k,*}+\frac{\rho_k}{2}\|\boldsymbol{\mathcal{X}}_{k(k)}-\boldsymbol{\mathcal{M}}_{(k)}\|_{F}^{2}+\big\langle\boldsymbol{\mathcal{X}}_{k}-\boldsymbol{\mathcal{M}},\boldsymbol{\mathcal{T}}_{k}\big\rangle\right), \\
    \end{aligned}
    \label{lagrangian}
\end{equation}
where $\big\langle\cdot,\cdot\big\rangle$ denotes the inner product. The auxiliary variables $\boldsymbol{\mathcal{T}}_{1},\boldsymbol{\mathcal{T}}_{2},\boldsymbol{\mathcal{T}}_{3}\in\mathbb{R}^{M\times N\times T}$ are used for dual update in the following ADMM scheme.

Accordingly, ADMM transforms the original tensor completion problem to the following three subproblems in an iterative manner:
\begin{equation}
    \begin{aligned}
    \boldsymbol{\mathcal{X}}_{k}^{l+1}:&=\operatorname{arg}\min_{\boldsymbol{\mathcal{X}}}\mathcal{L}(\boldsymbol{\mathcal{M}},\{\boldsymbol{\mathcal{X}}_{k}^{l+1},\boldsymbol{\mathcal{T}}_{k}^{l}\}_{k=1}^{3}), \\
    \boldsymbol{\mathcal{M}}^{l+1}:&=\operatorname{arg}\min_{\boldsymbol{\mathcal{M}}}\mathcal{L}(\boldsymbol{\mathcal{M}},\{\boldsymbol{\mathcal{X}}_{k}^{l+1},\boldsymbol{\mathcal{T}}_{k}^{l}\}_{k=1}^{3}), \\
    \boldsymbol{\mathcal{T}}_{k}^{l+1}:&=\boldsymbol{\mathcal{T}}_{k}^{l}+\rho_k(\boldsymbol{\mathcal{X}}^{l+1}_{k}-\boldsymbol{\mathcal{M}}^{l+1}), \\
    \end{aligned}
    \label{admm_prob}
\end{equation}
where we follow the order $\boldsymbol{\mathcal{X}}_{1}^{l+1}\to\cdots\to\boldsymbol{\mathcal{X}}_{3}^{l+1}\to\boldsymbol{\mathcal{M}}^{l+1}\to\boldsymbol{\mathcal{T}}_{1}^{l+1}\to\cdots\to\boldsymbol{\mathcal{T}}_{3}^{l+1}$.

In practice, if $\rho_1=\rho_2=\rho_3=\rho$, the variables $\boldsymbol{\mathcal{T}}_{k}^{l+1}$ can be updated by
\begin{equation}
    \boldsymbol{\mathcal{T}}^{l+1}:=\boldsymbol{\mathcal{T}}^{l}+\rho(\tilde{\boldsymbol{\mathcal{X}}}^{l+1}-\tilde{\boldsymbol{\mathcal{M}}}^{l+1}),
    \label{update_variable_T}
\end{equation}
where $\boldsymbol{\mathcal{T}},\tilde{\boldsymbol{\mathcal{X}}},\tilde{\boldsymbol{\mathcal{M}}}$ are fourth-order tensors of the same size $M\times N\times T\times 3$. $\boldsymbol{\mathcal{T}}$ and $\tilde{\boldsymbol{\mathcal{X}}}$ are stacked by third-order tensors $\boldsymbol{\mathcal{T}}_{k}$s and ${\boldsymbol{\mathcal{X}}}_{k}$s over the fourth mode, respectively, and $\tilde{\boldsymbol{\mathcal{M}}}$ is stacked over the fourth mode by copying the third-order tensor $\boldsymbol{\mathcal{M}}$.


\subsubsection{Computing Tensors $\boldsymbol{\mathcal{X}}_{k},k=1,2,3$}

Based on Eqs.~\eqref{lagrangian} and \eqref{admm_prob}, the subproblem for getting the optimal $\boldsymbol{\mathcal{X}}_{k}^{l+1}$ is
\begin{equation}
    \begin{aligned}
    \boldsymbol{\mathcal{X}}_{k}^{l+1}:&=\operatorname{arg}\min_{\boldsymbol{\mathcal{X}}}~\alpha_k\|\boldsymbol{\mathcal{X}}_{(k)}\|_{r_k,*}+\frac{\rho_k}{2}\|\boldsymbol{\mathcal{X}}_{(k)}-\boldsymbol{\mathcal{M}}_{(k)}^{l}\|_{F}^{2}+\big\langle\boldsymbol{\mathcal{X}}_{(k)},\boldsymbol{\mathcal{T}}_{k(k)}^{l}\big\rangle \\
    &=\operatorname{arg}\min_{\boldsymbol{\mathcal{X}}}~\alpha_k\|\boldsymbol{\mathcal{X}}_{(k)}\|_{r_k,*}+\frac{\rho_k}{2}\|\boldsymbol{\mathcal{X}}_{(k)}\|_{F}^{2}-\rho_k\big\langle\boldsymbol{\mathcal{X}}_{(k)},\boldsymbol{\mathcal{M}}_{(k)}^{l}-\frac{1}{\rho_k}\boldsymbol{\mathcal{T}}_{k(k)}^{l}\big\rangle, \\
    &=\operatorname{arg}\min_{\boldsymbol{\mathcal{X}}}~\alpha_k\|\boldsymbol{\mathcal{X}}_{(k)}\|_{r_k,*}+\frac{\rho_k}{2}\|\boldsymbol{\mathcal{X}}_{(k)}-(\boldsymbol{\mathcal{M}}_{(k)}^{l}-\frac{1}{\rho_k}\boldsymbol{\mathcal{T}}_{k(k)}^{l})\|_{F}^{2}, \\
    & =\operatorname{arg}\min_{\boldsymbol{\mathcal{X}}}~ G(\boldsymbol{\mathcal{X}}_{(k)}).
    \end{aligned}
    \label{optimal_x}
\end{equation}

In general, the optimal $\boldsymbol{\mathcal{X}}$ can be found by solving $\boldsymbol{0}\in\partial G(\boldsymbol{\mathcal{X}}_{(k)})$ when $G(\boldsymbol{\mathcal{X}}_{(k)})$ is convex. However, for a nonconvex $G(\cdot)$, a global optimal solution is not guaranteed \citep{hu2013fast} and it remains unclear whether an optimization problem like Eq.~\eqref{optimal_x} can converge to the desired global optima. In this work, we adopt a closed-form optimal/sub-optimal solution to this problem by using the following theorem.

\begin{theorem}
For any $\alpha,\rho>0$, $\boldsymbol{Z}\in\mathbb{R}^{m\times n}$, and $r\in\mathbb{N}_{+}$ where $r<\min\{m,n\}$, an optimal solution to the problem
\begin{equation}
    \min_{\boldsymbol{X}}~\alpha\|\boldsymbol{X}\|_{r,*}+\frac{\rho}{2}\|\boldsymbol{X}-\boldsymbol{Z}\|_{F}^{2},
    \label{tnn_prob}
\end{equation}
is given by the generalized singular value thresholding
\begin{equation}
    \hat{\boldsymbol{X}}=\boldsymbol{U}\boldsymbol{\Sigma}_{\alpha/\rho}\boldsymbol{V}^\top,
    \label{gsvt}
\end{equation}
where $\boldsymbol{U}\boldsymbol{\Sigma}\boldsymbol{V}^{\top}$ is the singular value decomposition of $\boldsymbol{Z}$. The shrinkage of singular values is therefore defined as
\begin{equation}
    \boldsymbol{\Sigma}_{\alpha/\rho}=\text{diag}\left((\sigma_{1},\cdots,\sigma_{r},[\sigma_{r+1}-\alpha/\rho]_{+},\cdots,[\sigma_{\min\{m,n\}}-\alpha/\rho]_{+})^\top\right),
    \label{shrinkage}
\end{equation}
where $\sigma_{1},...,\sigma_{\min\{m,n\}}$ are diagonal entries of $\boldsymbol{\Sigma}$, and $[\cdot]_{+}$ denotes the positive truncation at 0 which satisfies $[\sigma-\alpha/\rho]_{+}=\max\{\sigma-\alpha/\rho,0\}$.
\label{thm1}
\end{theorem}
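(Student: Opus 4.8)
The plan is to exploit the unitary invariance of both terms in the objective and reduce the matrix problem to a decoupled scalar problem over the singular values of $\boldsymbol{X}$. Writing $G(\boldsymbol{X})=\alpha\|\boldsymbol{X}\|_{r,*}+\frac{\rho}{2}\|\boldsymbol{X}-\boldsymbol{Z}\|_{F}^{2}$, I would first expand the quadratic term as
\[
\frac{\rho}{2}\|\boldsymbol{X}-\boldsymbol{Z}\|_{F}^{2}=\frac{\rho}{2}\|\boldsymbol{X}\|_{F}^{2}-\rho\langle\boldsymbol{X},\boldsymbol{Z}\rangle+\frac{\rho}{2}\|\boldsymbol{Z}\|_{F}^{2}.
\]
Both the TNN $\|\boldsymbol{X}\|_{r,*}=\sum_{i>r}\sigma_{i}(\boldsymbol{X})$ of Definition~\ref{def1} and $\|\boldsymbol{X}\|_{F}^{2}=\sum_{i}\sigma_{i}(\boldsymbol{X})^{2}$ depend only on the singular values of $\boldsymbol{X}$, while $\|\boldsymbol{Z}\|_{F}^{2}$ is constant; hence the only term that couples the singular vectors of $\boldsymbol{X}$ to those of $\boldsymbol{Z}$ is $-\rho\langle\boldsymbol{X},\boldsymbol{Z}\rangle$.

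Second, fixing the singular values of $\boldsymbol{X}$ and minimizing over its singular vectors amounts to maximizing $\langle\boldsymbol{X},\boldsymbol{Z}\rangle$. Here I would invoke von Neumann's trace inequality: for any $\boldsymbol{X},\boldsymbol{Z}$ with singular values sorted in decreasing order, $\langle\boldsymbol{X},\boldsymbol{Z}\rangle\leq\sum_{i}\sigma_{i}(\boldsymbol{X})\sigma_{i}(\boldsymbol{Z})$, with equality when $\boldsymbol{X}$ and $\boldsymbol{Z}$ admit a simultaneous SVD, i.e. $\boldsymbol{X}=\boldsymbol{U}\,\text{diag}(x)\,\boldsymbol{V}^{\top}$ with the same $\boldsymbol{U},\boldsymbol{V}$ from the SVD $\boldsymbol{Z}=\boldsymbol{U}\boldsymbol{\Sigma}\boldsymbol{V}^{\top}$. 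Consequently, an optimal $\hat{\boldsymbol{X}}$ shares singular vectors with $\boldsymbol{Z}$, and the problem collapses to
\[
\min_{x_{1}\geq\cdots\geq x_{\min\{m,n\}}\geq 0}~\alpha\sum_{i=r+1}^{\min\{m,n\}}x_{i}+\frac{\rho}{2}\sum_{i=1}^{\min\{m,n\}}(x_{i}-\sigma_{i})^{2},
\]
where $\sigma_{i}=\sigma_{i}(\boldsymbol{Z})$ and $x_{i}=\sigma_{i}(\boldsymbol{X})$.

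Third, I would solve this scalar problem by temporarily dropping the ordering constraint, since the objective is then separable across $i$. For $i\leq r$ the TNN contributes nothing, so the minimizer is $x_{i}=\sigma_{i}$; for $i>r$ the one-dimensional problem $\min_{x_{i}\geq 0}\alpha x_{i}+\frac{\rho}{2}(x_{i}-\sigma_{i})^{2}$ is standard soft-thresholding with minimizer $[\sigma_{i}-\alpha/\rho]_{+}$. I would then verify that this candidate automatically satisfies the dropped ordering constraint: because the $\sigma_{i}$ are nonincreasing, both $\sigma_{i}$ (for $i\leq r$) and $[\sigma_{i}-\alpha/\rho]_{+}$ (for $i>r$) are nonincreasing, and at the junction $x_{r}=\sigma_{r}\geq\sigma_{r+1}\geq[\sigma_{r+1}-\alpha/\rho]_{+}=x_{r+1}$, so the unconstrained minimizer is feasible and therefore optimal. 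Reassembling the singular values into $\boldsymbol{\Sigma}_{\alpha/\rho}$ of Eq.~\eqref{shrinkage} and using the shared singular vectors yields exactly $\hat{\boldsymbol{X}}=\boldsymbol{U}\boldsymbol{\Sigma}_{\alpha/\rho}\boldsymbol{V}^{\top}$.

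The main obstacle is the nonconvexity of $\|\boldsymbol{X}\|_{r,*}$ (it leaves the top $r$ singular values free), which rules out a subgradient argument and makes the alignment step essential: the entire reduction hinges on von Neumann's inequality to guarantee that minimizing over singular vectors is attained at the joint diagonalization, and I should be careful that the inequality aligns the two singular-value sequences in matching decreasing order so that the truncated indices $r+1,\ldots,\min\{m,n\}$ in the TNN coincide with the smallest $x_{i}$. A secondary point to state explicitly is that the theorem asserts \emph{an} optimal solution rather than \emph{the} solution: when $\boldsymbol{Z}$ has repeated singular values, or when $\sigma_{r}=\sigma_{r+1}$, the minimizer need not be unique, so I would phrase the conclusion as exhibiting one global minimizer rather than characterizing all of them.
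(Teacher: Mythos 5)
Your proof is correct, and it takes a genuinely different (and more complete) route than the paper's. The paper defines the same per-index scalar functions $f_i$ but then passes immediately to the componentwise stationarity condition $0\in\partial f_i(\sigma_i(\boldsymbol{X}))$, outsourcing the key reduction---why an optimizer of the matrix problem must share singular vectors with $\boldsymbol{Z}$ and why the objective then decouples across singular values---to the singular-value-function framework of \cite{larsson2014rank}. It also never addresses the constraint $\sigma_i(\boldsymbol{X})\geq 0$ (the $[\cdot]_{+}$ simply appears in the final display), and since the objective is nonconvex, verifying a first-order condition does not by itself certify the global optimality the theorem asserts. Your route makes the reduction explicit via von Neumann's trace inequality, solves each scalar subproblem globally on $[0,\infty)$ rather than merely locating a stationary point, and verifies that the candidate respects the dropped ordering constraint---that last check is precisely what guarantees the truncated indices in $\|\boldsymbol{X}\|_{r,*}$ line up with the soft-thresholded entries. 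What the paper's approach buys is brevity and a direct link to the generalized singular value thresholding literature; what yours buys is an actual certificate of global optimality for a nonconvex objective together with correct handling of the nonnegativity and ordering constraints. Your closing remark on non-uniqueness when $\boldsymbol{Z}$ has repeated singular values is also apt and consistent with the theorem claiming ``an'' optimal solution rather than ``the'' optimal solution.
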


\begin{proof}
For any $x,z>0$, define the function $f_i(\cdot)$ as
\begin{equation}
    f_i(x)=\left\{\begin{array}{ll}
        \alpha x+\frac{\rho}{2}(x-z)^2, & \text{if $i>r$,} \\
        \frac{\rho}{2}(x-z)^2,   & \text{otherwise}, \\
    \end{array}\right.
    \label{g_fun}
\end{equation}
which is differentiable, and
\begin{equation}
    \frac{\partial f_i(x)}{\partial x}=\left\{\begin{array}{ll}
        \alpha +\rho(x-z), & \text{if $i>r$,} \\
        \rho(x-z),   & \text{otherwise}. \\
    \end{array}\right.
\end{equation}

When minimizing the objective function $F(\boldsymbol{X})=\alpha\|\boldsymbol{X}\|_{r,*}+\frac{\rho}{2}\|\boldsymbol{X}-\boldsymbol{Z}\|_{F}^{2}$, we should find $\boldsymbol{X}$ such that $\boldsymbol{0}\in\partial F(\boldsymbol{X})$ to optimize $F$. To solve $\boldsymbol{0}\in\partial F(\boldsymbol{X})$, we construct the singular value function referring to \cite{larsson2014rank} as $f:\mathbb{R}^{\min\{m,n\}}\mapsto\mathbb{R}$ as $f(\boldsymbol{\sigma}(\boldsymbol{X}))=\sum_{i=1}^{\min\{m,n\}}f_{i}(\sigma_i(\boldsymbol{X}))$ where the function $f_{i}(\cdot)$ is defined as Eq.~\eqref{g_fun}. Considering the components of the sum in $f(\boldsymbol{\sigma}(\boldsymbol{X}))$ separately, we have
\begin{equation}
    0\in \frac{\partial f_i(\sigma_i(\boldsymbol{X}))}{\partial \sigma_i(\boldsymbol{X})}\quad\Rightarrow\quad \sigma_i(\boldsymbol{X})=\left\{\begin{array}{ll}
        \left[\sigma_i(\boldsymbol{Z})-\alpha/\rho\right]_+, & \text{if $i>r$,} \\
        \sigma_i(\boldsymbol{Z}),   & \text{otherwise}, \\
    \end{array}\right.
\end{equation}
and this result is same as Eq.~\eqref{shrinkage}.

\end{proof}

On the other hand, the TNN minimization problem described in Eq.~\eqref{tnn_prob} is indeed a special case of the weighted NN minimization problem, i.e.,
\begin{equation*}
    \min_{\boldsymbol{X}}~\alpha\boldsymbol{w}^\top\boldsymbol{\sigma}(\boldsymbol{X})+\frac{\rho}{2}\|\boldsymbol{X}-\boldsymbol{Z}\|_{F}^{2},
    \label{wnn_prob}
\end{equation*}
with the weights being $w_{1},...,w_{r}=0$ and $w_{r+1},...,w_{\min\{m,n\}}=1$. As proved by \cite{zhang2011penalty} (Proposition 2.1) and \cite{chen2013reduced} (Theorem 2.3), in the situation of $0\leq w_1\leq\cdots\leq w_{\min\{m,n\}}$ (i.e., order constraint), weighted NN minimization problem has an optimal solution and the shrinkage of singular values corresponding to Eq.~\eqref{shrinkage} is given by
\begin{equation}
    \boldsymbol{\Sigma}_{\alpha/\rho}=\text{diag}\left(([\sigma_{1}-\alpha w_1/\rho]_{+},\cdots,[\sigma_{\min\{m,n\}}-\alpha w_{\min\{m,n\}}/\rho]_{+})^\top\right),
    \label{weighted_shrinkage}
\end{equation}
which is consistent with Theorem~\ref{thm1}.

Now, according to Theorem \ref{thm1}, the shrinkage of singular values for the problem \eqref{optimal_x} is
\begin{equation}
    \sigma_i(\boldsymbol{\mathcal{X}}_{(k)})=\left\{\begin{array}{ll}
        \left[\sigma_{i}(\boldsymbol{\mathcal{M}}_{(k)}^{l}-\frac{1}{\rho_k}\boldsymbol{\mathcal{T}}_{k(k)}^{l})-\frac{\alpha_k}{\rho_k}\right]_{+}, & \text{if $i>r_k$,} \\
        \sigma_{i}(\boldsymbol{\mathcal{M}}_{(k)}^{l}-\frac{1}{\rho_k}\boldsymbol{\mathcal{T}}_{k(k)}^{l}), & \text{otherwise,}
    \end{array}\right.
\end{equation}
where the singular value decomposition of $\boldsymbol{\mathcal{M}}_{k(k)}^{l}-\frac{1}{\rho_k}\boldsymbol{\mathcal{T}}_{k(k)}^{l}$ is $\boldsymbol{U}\boldsymbol{\Sigma} \boldsymbol{V}^\top$ and the diagonal entries of $\boldsymbol{\Sigma}$ are $\sigma_{i}(\boldsymbol{\mathcal{M}}_{k(k)}^{l}-\frac{1}{\rho_k}\boldsymbol{\mathcal{T}}_{k(k)}^{l})$s. We could get an optimal $\boldsymbol{\mathcal{X}}_{k}^{l+1}$ by
\begin{equation}
    \boldsymbol{\mathcal{X}}_{k}^{l+1}=\text{fold}_{k}\left(\boldsymbol{U}\text{diag}(\boldsymbol{\sigma}(\boldsymbol{\mathcal{X}}_{(k)}))\boldsymbol{V}^\top\right).
    \label{update_x_func}
\end{equation}

\subsubsection{Computing Tensor $\boldsymbol{\mathcal{M}}$}

In terms of $\boldsymbol{\mathcal{M}}^{l+1}$, there is a least square solution based on  Eq.~\eqref{lagrangian} and \eqref{admm_prob}, which is given by
\begin{equation}
    \begin{aligned}
    \boldsymbol{\mathcal{M}}^{l+1}:&=\operatorname{arg}\min_{\boldsymbol{\mathcal{M}}}~\sum_{k=1}^{3}\left(\frac{\rho_k}{2}\|\boldsymbol{\mathcal{X}}_{k(k)}^{l+1}-\boldsymbol{\mathcal{M}}_{(k)}\|_{F}^{2}-\big\langle\boldsymbol{\mathcal{M}}_{(k)},\boldsymbol{\mathcal{T}}_{k(k)}^{l}\big\rangle\right) \\
    &=\operatorname{arg}\min_{\boldsymbol{\mathcal{M}}}~\sum_{k=1}^{3}\left(\frac{\rho_k}{2}\big\langle\boldsymbol{\mathcal{X}}_{k}^{l+1}-\boldsymbol{\mathcal{M}},\boldsymbol{\mathcal{X}}_{k}^{l+1}-\boldsymbol{\mathcal{M}}\big\rangle-\big\langle\boldsymbol{\mathcal{M}},\boldsymbol{\mathcal{T}}_{k}^{l}\big\rangle\right) \\
    &=\operatorname{arg}\min_{\boldsymbol{\mathcal{M}}}\sum_{k=1}^{3}\left(\frac{\rho_k}{2}\big\langle\boldsymbol{\mathcal{M}},\boldsymbol{\mathcal{M}}\big\rangle-\rho_k\big\langle\boldsymbol{\mathcal{M}},\boldsymbol{\mathcal{X}}_{k}^{l+1}\big\rangle-\big\langle\boldsymbol{\mathcal{M}},\boldsymbol{\mathcal{T}}_{k}^{l}\big\rangle\right) \\
    &=\operatorname{arg}\min_{\boldsymbol{\mathcal{M}}}\big\langle\boldsymbol{\mathcal{M}},\boldsymbol{\mathcal{M}}\big\rangle-\frac{2}{\sum_{k=1}^{3}\rho_k}\Big\langle\boldsymbol{\mathcal{M}},\sum_{k=1}^{3}\left(\rho_k\boldsymbol{\mathcal{X}}_{k}^{l+1}+\boldsymbol{\mathcal{T}}_{k}^{l}\right)\Big\rangle \\
    &=\frac{1}{\sum_{k=1}^{3}\rho_k}\sum_{k=1}^{3}\left(\rho_k\boldsymbol{\mathcal{X}}_{k}^{l+1}+\boldsymbol{\mathcal{T}}_{k}^{l}\right), \\
    \end{aligned}
    \label{optimal_M_variable}
\end{equation}
where we impose a fixed constraint, i.e., $\mathcal{P}_{\Omega}(\boldsymbol{\mathcal{M}}^{l+1}):=\mathcal{P}_{\Omega}(\boldsymbol{\mathcal{Y}})$, to guarantee the transformation of observation information at each iteration.



\section{Model Implementation} \label{sec:implementation}


As shown in Figure~\ref{lrtc_flow}, our goal is to impute the incomplete tensor as a complete one using the proposed LRTC-TNN model. The main procedures are: (1) adapting the raw data to a tensor structure with a specific rule, (2) imputing incomplete tensor data with our proposed model, and (3) filling in missing entries.

A practical issue in applying this tensor learning framework for spatiotemporal traffic data imputation is how to establish the rule of structuring spatiotemporal traffic data as a tensor. In this study, we assume that third-order tensor with dimensions corresponding to spatial location, day, and time interval is a typical algebraic data structure, which is experimentally proved to be efficient on a number of spatiotemporal traffic data \citep{chen2019abayesian}. To find a solution to the problem \eqref{LRTC_TNN_new}, it is sufficient to carry out the following steps (see the middle panel of Figure~\ref{lrtc_flow}): (1) Initialize the variables and set some necessary parameters; (2) Update the variables $\boldsymbol{\mathcal{X}}_{k}^{l+1}$, $\boldsymbol{\mathcal{M}}^{l+1}$, and $\boldsymbol{\mathcal{T}}^{l+1}$ according to Eq.~\eqref{update_x_func}, Eq.~\eqref{optimal_M_variable}, and Eq.~\eqref{update_variable_T}, respectively.

\begin{figure}
\centering
  \includegraphics[width=\textwidth]{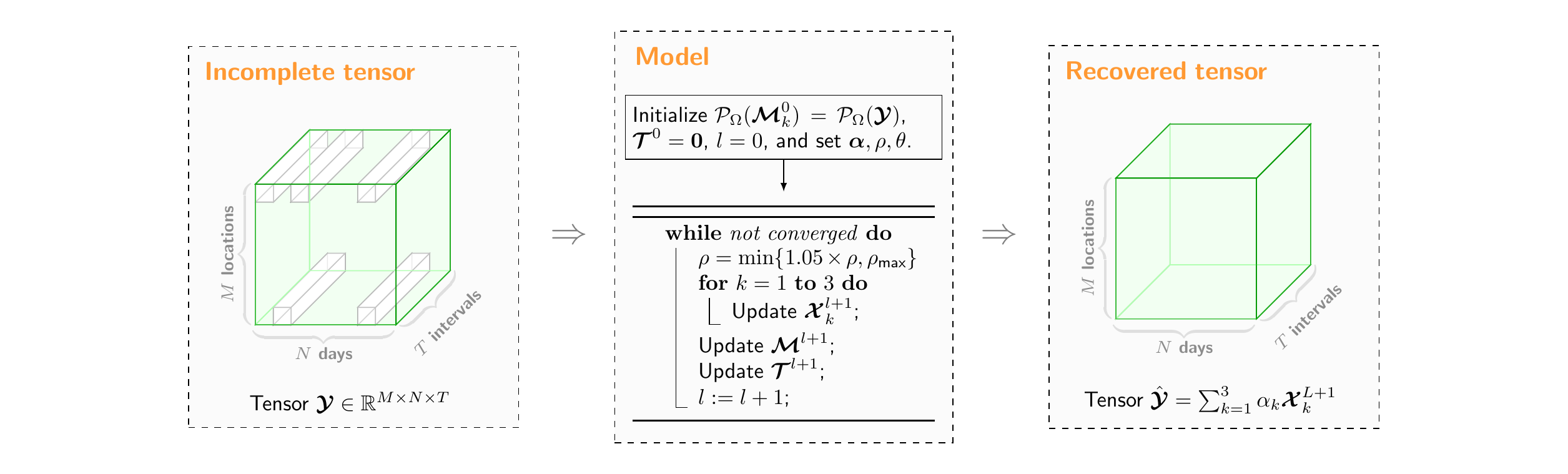}
\caption{Main procedures for a real-world data imputation task using the proposed LRTC-TNN model. Left panel: Organizing an incomplete tensor from partially observed traffic measurements; Middle panel: Implementing the proposed LRTC-TNN model; Right panel: Getting a recovered tensor.}
\label{lrtc_flow}
\end{figure}





\section{Experiments} \label{sec:experiments}

In this section, we conduct extensive experiments on four real-world spatiotemporal traffic data sets to evaluate the proposed LRTC-TNN model.


\subsection{Data Sets}

We choose four publicly available data sets collected from real-world transportation systems as our benchmark data sets. Table~\ref{datasets} provides a brief summary of these data sets. For simplicity, as shown in Table~\ref{datasets}, we refer to the four data sets as data set "G", "B", "H", and "S", respectively. Essentially, the four data sets follow the same tensor structure--``location/sensor$\times$day$\times$time of day'' and they can also be organized in a matrix structure of ``location/sensor$\times$time'' by stacking the time dimension.

\begin{table}[!ht]
\caption{Brief description of the four selected traffic data sets.}
\label{datasets}
\centering
\footnotesize
\begin{tabular}{lcccccccc}
\toprule
Data Set & Description & Data Size \\
\midrule
\parbox[t]{0.15\textwidth}{(\textbf{G}) Guangzhou urban traffic speed data\tablefootnote{\url{https://doi.org/10.5281/zenodo.1205229}}} & \parbox[t]{0.55\textwidth}{The data set contains average traffic speed collected from 214 road segments over two months (from August 1 to September 30, 2016) with a 10-minute resolution (144 time intervals per day) in Guangzhou, China. This data set contains 1.29\% missing values.} & \parbox[t]{0.20\textwidth}{Tensor: $214\times 61\times 144$,\newline matrix: $214\times 8784$.} \\
\midrule
\parbox[t]{0.15\textwidth}{(\textbf{B}) Birmingham parking data\tablefootnote{\url{https://archive.ics.uci.edu/ml/datasets/Parking+Birmingham}}} & \parbox[t]{0.55\textwidth}{This data set registers occupancy (i.e., number of parked vehicles) of 30 car parks in Birmingham City for every half an hour between 8:00 and 17:00 over more than two months (77 days from October 4, 2016 to December 19, 2016). This data set contains 14.89\% missing values.} & \parbox[t]{0.20\textwidth}{Tensor: $30\times 77\times 18$,\newline matrix: $30\times 1386$.} \\
\midrule
\parbox[t]{0.15\textwidth}{(\textbf{H}) Hangzhou metro passenger flow data\tablefootnote{\url{https://tianchi.aliyun.com/competition/entrance/231708/information}}} & \parbox[t]{0.55\textwidth}{This data set provides incoming passenger flow of 80 metro stations over 25 days (from January 1 to January 25, 2019) with a 10-minute resolution in Hangzhou, China. We discard the interval 0:00 a.m. – 6:00 a.m. with no services (i.e., only consider the remaining 108 time intervals of a day).} & \parbox[t]{0.20\textwidth}{Tensor: $80\times 25\times 108$,\newline matrix: $80\times 2700$.} \\
\midrule
\parbox[t]{0.15\textwidth}{(\textbf{S}) Seattle freeway traffic speed data\tablefootnote{\url{https://github.com/zhiyongc/Seattle-Loop-Data}}} & \parbox[t]{0.55\textwidth}{This data set contains freeway traffic speed from 323 loop detectors with a 5-minute resolution over the whole year of 2015 in Seattle, USA. We choose the subset in January (4 weeks from January 1 to January 28) as our experiment data.} & \parbox[t]{0.20\textwidth}{Tensor: $323\times 28\times 288$,\newline matrix: $323\times 8064$.} \\
\bottomrule
\end{tabular} \\
\end{table}

\subsection{Baseline Models}

We compare the proposed LRTC-TNN model with the following baseline models:
\begin{itemize}
    \item BTMF: Bayesian Temporal Matrix Factorization \citep{chen2019bayesian}. This is a fully Bayesian matrix factorization model which integrates vector autoregressive (VAR) model into the latent temporal factors. With the flexible VAR process, BTMF achieves superior accuracy over other matrix factorization models (without temporal modeling) and tensor factorization models in imputation tasks.
    \item TRMF: Temporal Regularized Matrix Factorization  \citep{yu2016temporal}. This is a temporal matrix factorization model which applies a multiple autoregressive (AR) process to model latent temporal factors. BTMF and TRMF can be considered a generalized version of the Bayesian temporal tensor factorization model by \citet{xiong2010temporal}, which imposes temporal smoothness with a dynamic model.
    \item BGCP: Bayesian Gaussian CP decomposition  \citep{chen2019abayesian}. This is a fully Bayesian tensor factorization model which uses Markov chain Monte Carlo (MCMC) to learn the latent factor matrices (i.e., low-rank structure).
    \item BATF: Bayesian Augmented Tensor Factorization  \citep{chen2019missing}. This is a fully Bayesian model built on a special tensor factorization formula, in which components include both explicit variables and latent factors (i.e., low-rank factors). Variables in this model are learned via variational Bayes.
    \item HaLRTC: High-accuracy Low-Rank Tensor Completion  \citep{liu2013tensor}. This is a LRTC model which uses NN minimization to find accurate estimation for unobserved/missing entries in tensor data.
\end{itemize}


We have the following considerations on choosing the baseline models. In general, we take into account two types of imputation approaches built on the assumption of underlying low-rank structure, which has been proved to be effective and efficient in the previous studies. The first is low-rank temporal matrix factorization models like BTMF and TRMF. These models follow the matrix structure (i.e., ``location/senor$\times$time'') and impose smoothness on latent temporal factors by introducing autoregressive/dynamic assumptions. Therefore, these temporal models produce more informative and consistent low-rank structures than those estimated from standard matrix factorization. The second approach follows the tensor structure (i.e., ``location/senor$\times$day$\times$time of day'') to better utilize the information across different days. The tensor approach includes tensor factorization models (BGCP and BATF) and a standard LRTC model (HaLRTC). In the context of spatiotemporal traffic data, these models can utilize the intrinsic characteristics of these data (e.g., daily similarity).


\subsection{Experiment Setting}

For model evaluation, we mask certain entries of the data as missing values and then perform imputation for these ``missing'' entries. We use the actual values (ground truth) of these entries to compute the metrics MAPE and RMSE:
\begin{equation}
\operatorname{MAPE}=\frac{1}{n} \sum_{i=1}^{n} \left|\frac{y_{i}-\hat{y}_{i}}{y_{i}}\right| \times 100, \quad    \operatorname{RMSE}=\sqrt{\frac{1}{n} \sum_{i=1}^{n}\left(y_{i}-\hat{y}_{i}\right)^{2}}.
\end{equation}

Following \citep{chen2018spatial}, we design two missing patterns---random missing (RM, see Figure~\ref{missing}(a)) and non-random missing (NM, see Figure~\ref{missing}(b)). The NM scenario is more challenging since the data is corrupted in a correlated manner. These two missing scenarios can help us better assess the performance and effectiveness of different models.

There are some parameters to be set in LRTC-TNN, including $\alpha_k$ and $\theta$. Parameters $\alpha_1,\alpha_2,\alpha_3$ capture the importance/weight of the three tensor unfoldings. When applying the model, it is feasible to give different weights on different unfoldings. However, this requires prior knowledge that we do not have. Although it is feasible to tune these parameters using cross validation, the process will be computationally very expensive as the procedure has to be performed for each data sets in each missing scenarios. Instead of tuning these parameters, we simply use the same weights ($\alpha_1=\alpha_2=\alpha_3=\frac{1}{3}$) for the unfoldings following the tensor nuclear norm and HaLRTC in \citet{liu2013tensor}. We update $\rho$ in each iteration with $\rho=\min \{ 1.05\times \rho,\rho_{\max}  \} $ as in \citet{lin2011linearized} and set the initial value of $\rho$ to be $\rho=10^{-5}$ and the maximum value as $\rho_{\max}=10^5$. We use $\mathcal{C}^{l+1}=\frac{\|\hat{\boldsymbol{\mathcal{Y}}}^{l+1}-\hat{\boldsymbol{\mathcal{Y}}}^{l}\|_{F}}{\|\mathcal{P}_{\Omega}(\boldsymbol{\mathcal{Y}})\|_{F}} < \epsilon$ as convergence condition, where $\hat{\boldsymbol{\mathcal{Y}}}^{l+1}$ and $\hat{\boldsymbol{\mathcal{Y}}}^{l}$ denote the recovered tensors at the $l+1$th iteration and the $l$th iteration, respectively. We set $\epsilon=1\times 10^{-4}$. The pesudo code for the solution algorithm of LRTC-TNN is summarized in  Figure~\ref{lrtc_flow}.

Here, the learning rate parameter $\rho$ and the truncation rate parameter $\theta$ determine the performance directly. By using cross validation, we set the parameter $\theta$ for (G), (B), (H), and (S) with RM scenario as 30\%, 15\%, 10\%, and 30\%, respectively. In terms of NM scenario, the parameter $\theta$ for (G), (B), (H), and (S) as 5\%, 5\%, 10\%, and 5\%, respectively. For learning rate parameter $\rho$, it determines the convergence of the whole model. Larger $\rho$ usually slows down the convergence process, while a smaller one would let the model meet convergence in only few iterations. The rate parameter $\theta$ for NM data should be set much smaller than RM data, wherein we set the smallest $\theta$ as 5\% for these data sets. The maximum number of iteration of LRTC-TNN for these imputation experiments is set to 200, and this value is enough to reach convergence in our experiments. The adapted data sets and Python implementation for these experiments are available in the \emph{transdim} GitHub repository \url{https://github.com/xinychen/transdim}.




\subsection{Results}

Table~\ref{imputation_result} summarizes the imputation performance of matrix factorization models (BTMF and TRMF), tensor factorization models (BGCP and BATF), and LRTC models (HaLRTC and LRTC-TNN) on the four spatiotemporal data sets. We follow the same missing rates as our previous work \citep{chen2019bayesian} and configure all matrix and tensor factorization models with the same number of factors and BTMF and TRMF with the same time lags. The number of factors of factorization models (i.e., BTMF, TRMF, BGCP, and BATF) are set to 80, 30, 50, 50 for RM data sets (G), (B), (H), and (S), respectively. For the NM scenario, we set the number of factors to 10 for all data sets. The time lag of BTMF and TRMF for all four data sets is $\mathcal{L}=\{1,2,T\}$ where $T$ is the number of time intervals per day.

Overall, LRTC-TNN clearly outperforms other baseline models in diverse missing scenarios (RM and NM scenarios with varying missing rates). For all these models, we see that the MAPE/RMSE values are higher for NM scenario than RM scenario, suggesting that NM scenario is indeed more difficult to handle than the RM scenario. Another fact is that the MAPE/RMSE values become higher with increasing missing rates. Both missing pattern and missing rate have a direct impact on all these models.

For data sets (G), (H) and (S), LRTC-TNN consistently outperforms other baseline models significantly. For the Birmingham (B) parking data set, LRTC-TNN achieves the best imputation performance for the scenario of NM, while the two temporal factorization models BTMF and TRMF performs the best for the RM scenario. This is due to the strong temporal patterns and consistency in data set (B), and thus the temporal smoothness and dynamics (e.g., AR/VAR) play a more important role in capturing the true signal.

Both the two LRTC models (HaLRTC and LRTC-TNN) achieve promising performance; however, we find that LRTC-TNN consistently outperforms HaLRTC with much lower MAPE/RMSE values. On the other hand, the MAPE/RMSE values of using LRTC-TNN do not become higher dramatically as HaLRTC when missing rate increases, and this is also indicated in Table~\ref{high_missing_result}. These results clearly show the advantage of using TNN minimization over NN minimization for learning low-rank tensor structures and retaining dominant data patterns.



\begin{table}[!ht]
\caption{Performance comparison (in MAPE/RMSE) for RM and NM for imputation tasks on data sets (G), (B), (H), and (S).}
\label{imputation_result}
\centering
\footnotesize
\begin{tabular}{lcccccccc}
\toprule
& {BTMF} & TRMF & {BGCP} & {BATF} & HaLRTC & LRTC-TNN \\
\midrule
20\%, RM-G & 7.47/3.19 & 7.47/{3.14} & {8.28/3.57} & {8.32/3.59} & 8.13/3.33 & \textbf{6.70}/\textbf{2.88} \\
40\%, RM-G & 7.81/3.35 & 7.76/3.25 & {8.29/3.59} & {8.36/3.61} & 8.86/3.61 & \textbf{7.32}/\textbf{3.17} \\
20\%, NM-G & {10.16}/4.27 & 10.24/4.27 & {10.20/4.27} & {{10.17}/{4.26}} & 10.45/{4.21} & \textbf{9.37}/\textbf{3.97} \\
40\%, NM-G & 10.36/4.46 & 10.37/4.37 & {10.25/4.32} & {{10.17}/{4.30}} & 10.88/4.38 & \textbf{9.54}/\textbf{4.08} \\
10\%, RM-B & \textbf{1.71}/\textbf{7.44} & 2.77/10.57 & {6.50/19.69} & {6.93/20.65} & 4.78/17.18 & 4.21/13.11 \\
30\%, RM-B & \textbf{2.61}/\textbf{13.38} & 3.69/21.80 & {6.23/19.98} & {6.68/21.29} & 6.59/26.63 & 5.15/17.47 \\
10\%, NM-B & {12.05}/28.27 & 12.74/29.46 & {13.64/43.15} & {16.28/40.81} & \textbf{9.38}/34.52 & 9.40/\textbf{23.26} \\
30\%, NM-B & 15.44/61.69 & 16.35/85.98 & {15.93/{57.07}} & {15.95/{57.07}} & {14.69}/92.66 & \textbf{13.31}/\textbf{52.32} \\
20\%, RM-H & 25.18/{28.51} & 21.31/37.07 & {19.01/41.16} & {22.74/33.07} & {18.27}/28.87 & \textbf{18.03}/\textbf{24.90} \\
40\%, RM-H & 26.83/32.19 & 22.89/38.15 & {19.59/32.71} & {23.17/{31.62}} & {19.02}/31.81 & \textbf{18.80}/\textbf{25.90} \\
20\%, NM-H & 26.50/81.73 & 26.07/40.06 & {25.57/35.99} & {34.94/{29.32}} & {20.30}/40.51 & \textbf{19.71}/\textbf{27.42} \\
40\%, NM-H & 30.24/80.53 & 27.32/{39.75} & {24.37/49.64} & {30.63/48.01} & {{21.46}}/53.15 & \textbf{20.43}/\textbf{29.04} \\
20\%, RM-S & {5.92}/3.71 & 5.96/3.71 & 7.45/4.50 & 8.70/3.73 & 5.93/{3.47} & \textbf{4.65}/\textbf{3.06} \\
40\%, RM-S & 6.18/3.79 & {6.16}/3.79 & 7.58/4.54 & 8.73/{3.75} & 6.76/3.83 & \textbf{5.12}/\textbf{3.30} \\
20\%, NM-S & 9.12/5.27 & 9.12/5.26 & 9.93/5.65 & 10.15/{4.25} & {8.79}/4.69 & \textbf{6.93}/\textbf{4.19} \\
40\%, NM-S & 9.20/5.33 & {9.19}/5.30 & 9.94/5.68 & 10.19/5.27 & 10.19/5.27 & \textbf{7.59}/\textbf{4.50} \\
\bottomrule
\multicolumn{5}{l}{\scriptsize{Best results are highlighted in bold fonts.}}
\end{tabular}
\end{table}

In general, the imputation problem becomes more and more challenging with increasing missing rate. We next investigate the performance of different models under heavy and extreme missing scenarios. In doing so, we compare LRTC-TNN with three baseline models: BTMF as a matrix factorization model, BGCP as a tensor factorization model, and HaLRTC as a low-rank completion model. Table~\ref{high_missing_result} shows the performance of the four models in both RM and NM scenarios under high missing rates (i.e., 50\%, 60\%, and 70\%). As can be seen, LRTC-TNN achieves the lowest MAPE/RMSE values in most cases. It should be noted that HaLRTC becomes unstable in heavy missing scenarios. On the contrary, LRTC-TNN is robust and capable of addressing heavy missing scenarios with high accuracy. These results also imply that the nonconvex TNN minimization shows superior performance over the convex NN minimization.

\begin{table}[!ht]
\caption{Imputation comparison (in MAPE/RMSE) among BTMF, BGCP, HaLRTC, and LRTC-TNN with high missing rates.}
\label{high_missing_result}
\centering
\footnotesize
\begin{tabular}{lcccccccc}
\toprule
& BTMF & BGCP & HaLRTC & LRTC-TNN \\
\midrule
50\%, RM-G & \textbf{7.56}/\textbf{3.25} & 9.31/3.77 & 9.30/3.77 & 7.68/3.32 \\
60\%, RM-G & \textbf{7.93}/\textbf{3.41} & 9.83/3.96 & 9.82/3.96 & 8.12/3.51 \\
70\%, RM-G & \textbf{8.38}/\textbf{3.61} & 10.45/4.18 & 10.45/4.18 & 8.61/3.72 \\
50\%, NM-G & 9.87/4.17 & 11.31/4.52 & 11.30/4.52 & \textbf{9.74}/\textbf{4.16} \\
60\%, NM-G & 10.05/\textbf{4.24} & 11.81/4.69 & 11.80/4.69 & \textbf{9.92}/\textbf{4.24} \\
70\%, NM-G & 10.87/4.52 & 12.65/4.96 & 12.66/4.97 & \textbf{10.21}/\textbf{4.33} \\
50\%, RM-B & \textbf{6.03}/\textbf{20.97} & 9.16/39.96 & 9.12/39.85 & 6.52/22.68 \\
60\%, RM-B & \textbf{6.66}/\textbf{24.53} & 11.17/55.14 & 11.14/55.02 & 7.62/27.15 \\
70\%, RM-B & 9.22/\textbf{32.31} & 14.13/73.58 & 14.13/73.43 & \textbf{8.95}/33.45 \\
50\%, NM-B & 15.35/\textbf{85.93} & 19.28/195.53 & 19.24/194.82 & \textbf{15.32}/86.95 \\
60\%, NM-B & \textbf{17.91}/100.81 & 24.13/301.79 & 24.12/303.42 & 18.01/\textbf{96.18} \\
70\%, NM-B & \textbf{20.39}/\textbf{115.67} & 29.32/385.27 & 29.34/387.76 & 20.95/145.29 \\
50\%, RM-H & 19.45/\textbf{26.79} & 19.51/33.26 & 19.51/33.26 & \textbf{19.26}/26.86 \\
60\%, RM-H & 19.85/28.33 & 20.13/36.19 & 20.09/36.19 & \textbf{19.56}/\textbf{27.84} \\
70\%, RM-H & 20.57/30.34 & 21.01/40.08 & 20.95/40.08 & \textbf{20.34}/\textbf{29.90} \\
50\%, NM-H & 22.27/34.42 & 22.89/60.86 & 22.88/60.63 & \textbf{21.22}/\textbf{30.68} \\
60\%, NM-H & 21.86/40.30 & 23.95/92.28 & 23.93/91.92 & \textbf{21.22}/\textbf{37.67} \\
70\%, NM-H & 22.17/42.83 & 26.30/108.37 & 26.23/107.63 & \textbf{21.29}/\textbf{39.70} \\
50\%, RM-S & \textbf{5.27}/\textbf{3.36} & 7.31/4.08 & 7.30/4.07 & 5.43/3.47 \\
60\%, RM-S & \textbf{5.59}/\textbf{3.52} & 7.91/4.34 & 7.90/4.34 & 5.80/3.66 \\
70\%, RM-S & \textbf{6.19}/\textbf{3.84} & 8.89/4.76 & 8.89/4.76 & 6.53/4.04 \\
50\%, NM-S & 8.15/4.74 & 11.20/5.64 & 11.19/5.64 & \textbf{8.12}/\textbf{4.73} \\
60\%, NM-S & 8.69/5.03 & 12.38/6.11 & 12.37/6.11 & \textbf{8.61}/\textbf{5.00} \\
70\%, NM-S & 9.93/5.62 & 14.30/6.91 & 14.34/6.94 & \textbf{9.41}/\textbf{5.45} \\
\bottomrule
\multicolumn{5}{l}{\scriptsize{Best results are highlighted in bold fonts.}}
\end{tabular}
\end{table}


We next choose some examples from the four experiment data sets and visualize the NM time series with the extreme missing rate $70\%$ and the corresponding recovered time series by using LRTC-TNN in Figure~\ref{curve}. We see that LRTC-TNN can achieve very high accuracy for all the four data sets with only 30\% input.


\begin{figure}
\centering
  \includegraphics[width=0.95\textwidth]{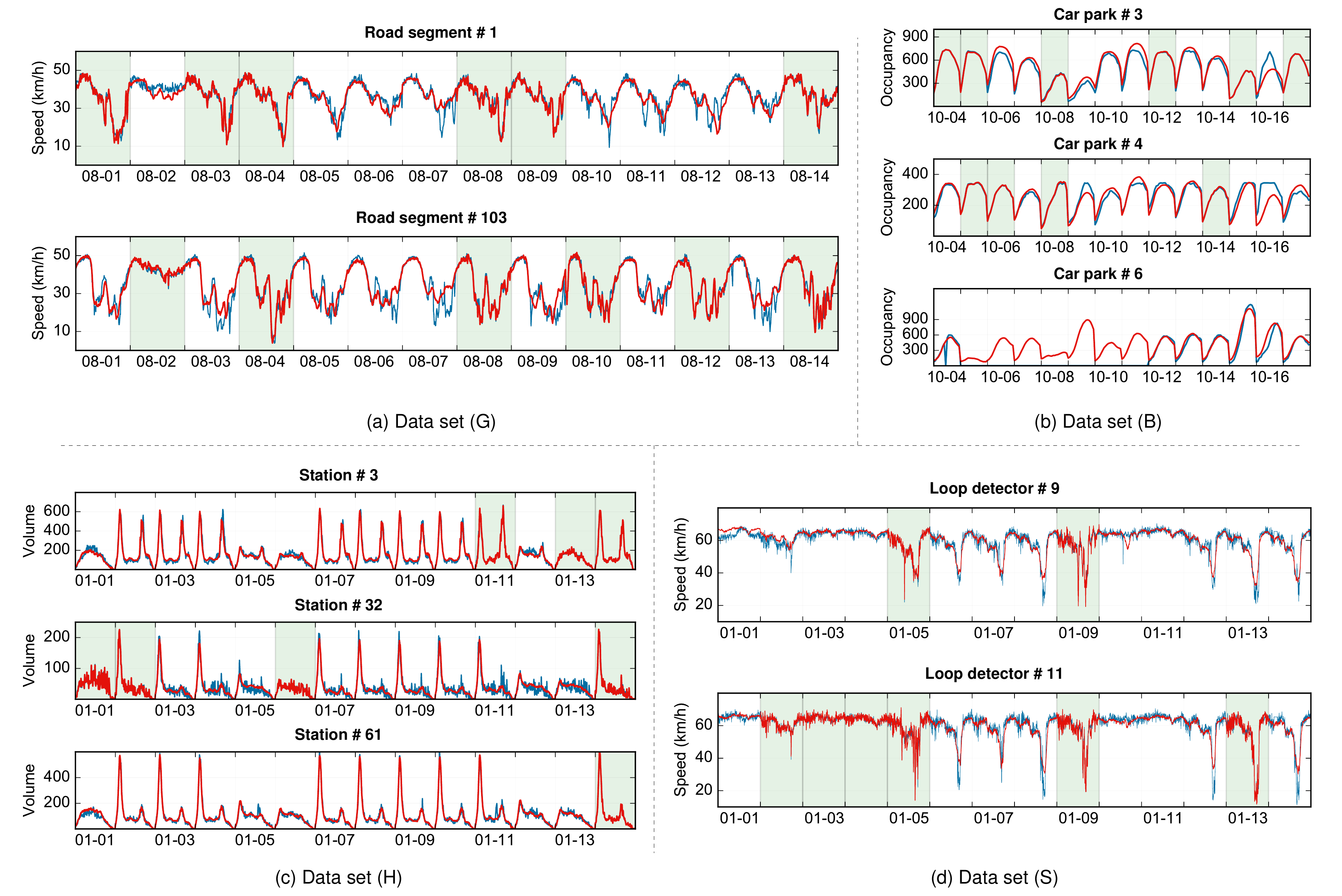}
\caption{Imputation examples for the four traffic data sets. In these panels, red curves are the imputed values, while blue curves are the ground truth. White rectangles represent fiber missing (i.e., observations are missing in a whole day), and green rectangles show the partially observed data.}
\label{curve}
\end{figure}

\section{Conclusion and Future Directions} \label{sec:conclusion}

In this paper, we propose a nonconvex TNN minimization based low-rank tensor completion model---LRTC-TNN---to impute missing values in spatiotemporal traffic data. Our experiments show that our proposed model consistently outperforms other state-of-the-art imputation models on the four selected real-world traffic data sets with two representative missing patterns (i.e., random missing [RM] and non-random missing [NM]). The superiority of tensor representation (with additional ``day'' dimension) over matrix-based models further verifies the strong and unique low-rank property of traffic time series data at different temporal scales \citep{li2015trend}.


For future research, we propose the following directions:
\begin{itemize}

    \item\emph{Incorporating side information and smoothness priors}: Spatiotemporal data demonstrates both global consistency and local consistency (e.g., the autoregressive patterns in BTMF and TRMF). The underlying low-rank tensor is effective to capture the global consistency. To further impose local consistency and better characterize side information, we can introduce smoothness priors to the model by introducing additional cost terms or constraints into the optimization framework, such as spatial Laplacian regularization, temporal Toeplitz regularization, and total variation \citep{roughan2011spatio,yokota2016smooth}.


    For instance, we can encode a priori information on the spatial structure using a graph Laplacian matrix $\boldsymbol{\Lambda}$ where $\boldsymbol{\Lambda}=\boldsymbol{D}-\boldsymbol{A}$ and $\boldsymbol{A},\boldsymbol{D}$ are the adjacent matrix of spatial structure and the degree matrix, respectively. This allows us to impose the similarity between two connected locations by introducing the penalty  $\frac{\beta}{2}\operatorname{tr}(\boldsymbol{X}^\top\boldsymbol{\Lambda}\boldsymbol{X})$ where rows of any matrix $\boldsymbol{X}$ correspond to spatial locations. Combining with autoregression or smoothness on temporal dimension, we can formulate a spatiotemporal regularization on the variable $\boldsymbol{\mathcal{M}}^{l+1}$ (see the subproblem in Eq.~\eqref{optimal_M_variable}).


    \item\emph{Minimizing Schatten $p$-norm}: In algebra, Schatten $p$-norm is defined as
    \begin{equation}
        \|\boldsymbol{X}\|_{S_p}=\left[\sum_{i}\sigma_{i}^{p}(\boldsymbol{X})\right]^{1/p},
    \end{equation}
    where $\boldsymbol{X}\in\mathbb{R}^{m\times n}$ and $p>0$ \citep{fan2019factor}. Here, we could see that: (1) If $p\to 0$, then Schatten $p$-norm is the rank function, i.e., $\text{rank}(\boldsymbol{X})$; (2) If $p=1$, then Schatten $p$-norm is the nuclear norm. Therefore, we can find a better surrogate for the rank function by using the Schatten $p$-norm than the NN by letting $0<p<1$. Accordingly, truncated Schatten $p$-norm that stems from Schatten $p$-norm might be a potentially better option than TNN. However, a critical bottleneck for Schatten $p$-norm minimization is the nonconvex optimization, which is more difficult to solve than TNN minimization discussed in this work.

    \item\emph{Minimizing weighted nuclear norm}: Instead of using TNN minimization, we can develop a weighted version of the NN minimization to better account for the differences/variations between singular values \citep{gu2014weighted}. The objective function of our problem can also be defined by a weighted version of the NN minimization:
\begin{equation}
    \|\boldsymbol{{X}}\|_{\boldsymbol{w},*}=\sum_{i}w_{i}\sigma_{i}(\boldsymbol{{X}}),
    \label{weighted_NN}
\end{equation}
where $\boldsymbol{w}=[{w}_{1},{w}_{2},\cdots,w_{\min\{m,n\}}]^\top$ are non-negative weights assigned to the singular values $\sigma_1,\sigma_2,...,\sigma_{\min\{m,n\}}$. This will lead a more general nonconvex form; however, this will introduce an additional set of weight parameters and it becomes a  challenge to tune these parameters for a specific data set.

    \item\emph{Developing new formulations for tensor rank approximation}: The Schatten $p$-norm and weighted nuclear norm are two alternatives to truncated nuclear norm for matrices. However, it remains a critical challenge to define a proper norm for traffic data tensors. This study employs the tensor nuclear norm defined in Eq.~\eqref{LRTC_NN} and we simply use the same $\alpha_k$ for all the three unfoldings. However, different from image data, the dimensions for traffic tensor (i.e., sensor, time of day, and day) are heterogeneous and cannot be compared directly. As a result, we may expect the three unfoldings contribute differently to the overall rank approximation. A potential future research direction is to explore new alternatives for tensor rank approximation.
\end{itemize}

\section*{Acknowledgement}
This research is supported by the Natural Sciences and Engineering Research Council (NSERC) of Canada, the Fonds de recherche du Quebec – Nature et technologies (FRQNT), and the Canada Foundation for Innovation (CFI). X. Chen would  like  to  thank  the  Institute  for  Data  Valorisation(IVADO) for providing scholarship to support this study.

\end{document}